\documentclass[conference]{IEEEtran}
\IEEEoverridecommandlockouts
\usepackage{cite}
\usepackage{amsmath,amssymb,amsfonts}
\usepackage{amsthm}
\usepackage{algorithm}
\usepackage[mathlines,switch]{lineno}
\usepackage{algorithmic}
\usepackage{booktabs}
\usepackage{multirow}
\usepackage{graphicx}
\usepackage[caption=false,font=footnotesize]{subfig}
\usepackage{textcomp}
\usepackage{xcolor}
\usepackage{url}
\newtheorem{theorem}{Theorem}
\newtheorem{lemma}{Lemma}
\newtheorem{proposition}{Proposition}
\theoremstyle{definition}
\newtheorem{definition}{Definition}
\theoremstyle{remark}
\newtheorem{remark}{Remark}
\theoremstyle{plain}
\def\BibTeX{{\rm B\kern-.05em{\sc i\kern-.025em b}\kern-.08em
    T\kern-.1667em\lower.7ex\hbox{E}\kern-.125emX}}
\begin{document}

\title{Redistribution-based Cost Inference Improves Sparse
Safe Offline RL}

\author{
\IEEEauthorblockN{Ebenezer Gelo,
Geraud Nangue Tasse,
Steven James,
Benjamin Rosman}
\IEEEauthorblockA{
\textit{University of the Witwatersrand}\\
Johannesburg, South Africa\\
ebenezer.gelo1@students.wits.ac.za}
}

\maketitle

\begin{abstract}
Safe offline RL typically assumes access to dense per-step cost annotations, but in practice supervisors provide only trajectory-level stop-feedback: a binary signal at the first unsafe transition, with no per-step attribution. We frame this as a temporal credit assignment problem and propose the Redistribution-based Cost Inference (RCI) framework, which converts sparse stop-feedback into dense per-step costs via return decomposition, then trains a constrained offline policy on the augmented dataset. We show that return-equivalent redistribution preserves the feasible policy set and the optimal Lagrangian in a CMDP, establishing that the transformation is lossless in theory while yielding better-conditioned cost critic learning in practice. Experiments on highway driving and robotic manipulation demonstrate substantially lower violation rates than sparse and classifier-based baselines, with robustness to heterogeneous dataset compositions and label noise.

\end{abstract}


\section{Introduction}
\label{sec:introduction}

Safe reinforcement learning (RL) is central to deploying autonomous agents in high-stakes domains such as autonomous driving, clinical decision support, and robotic manipulation~\cite{Achiam_2017,review_safeRL}. The predominant formalism, the Constrained Markov Decision Process (CMDP)~\cite{Altman_1998}, encodes safety as a constraint on expected cumulative cost, but standard CMDP methods presuppose access to dense, per-step cost annotations---a requirement that is prohibitively expensive to satisfy in practice.

The offline RL paradigm exacerbates this gap. Agents trained on fixed historical datasets avoid the risks of online exploration~\cite{Fujimoto_2019,Levine_2020,Fujimoto_2021}, yet they inherit the safety profile of the behavior policy and face unreliable value and cost estimates under distributional shift~\cite{Kumar_2020}. Critically, offline datasets collected without safety instrumentation lack the cost labels required by existing constrained methods, and post-hoc expert annotation at the per-step level is intractable at scale.

A more realistic supervision signal is \emph{trajectory-level stop feedback}: a binary label indicating whether an episode was halted by a safety monitor upon detecting a violation. Such signals arise naturally from human oversight and automated safety systems, which operate by terminating execution rather than scoring individual transitions. However, each unsafe trajectory contributes only a single cost label at its termination point, providing no direct information about which earlier actions precipitated the violation. Recovering per-step costs from this signal is a temporal credit-assignment problem~\cite{Arjona-Medina_2019, Zhang_2023, Low_2025}, compounded here by distributional shift and the one-shot nature of offline data collection.

We address this problem with the \textbf{Redistribution-based Cost Inference (RCI)} framework, which converts sparse trajectory-level stop labels into dense per-step cost estimates via return decomposition, then trains safe policies using standard constrained offline RL. RCI is explicitly modular: the redistribution stage accepts any return-equivalent decomposition method (instantiated here with RUDDER~\cite{Arjona-Medina_2019}, though GRD~\cite{Zhang_2023} applies directly), and the policy learning stage accepts any constrained offline RL algorithm (instantiated with BCQ-Lagrangian~\cite{Fujimoto_2019}, though CPQ or CDT substitute without modification). We evaluate on highway driving and simulated robot control tasks across datasets generated by unsafe, random, and mixed behavior policies, demonstrating that RCI substantially reduces constraint violation rates while preserving task performance comparable to an unconstrained baseline.

\section{Preliminaries}

We model a task as a Markov decision process (MDP), defined by the tuple $(\mathcal{S}, \mathcal{A}, P, r, \gamma)$, where $\mathcal{S}$ is the state space, $\mathcal{A}$ is the action space, $P(s'|s,a)$ is the transition dynamics, $r(s,a)$ is the reward function, and $\gamma \in [0,1)$ is the discount factor. A policy $\pi(a|s)$ generates trajectories $\tau = (s_0, a_0,r_0, s_1, \dots, s_T)$ with cumulative return $R(\tau) = \sum_{t=0}^{T} \gamma^t r(s_t, a_t)$, and the objective is to find a policy that maximizes $\mathbb{E}[R(\tau)]$~\cite{sutton1998reinforcement}.

In \textit{offline} RL, the agent learns from a fixed dataset $\mathcal{D} = \{\tau_i\}_{i=1}^N$ of trajectories collected by a behavior policy $\mu$, without any further environment interaction. This avoids unsafe online exploration but introduces \emph{distributional shift}: the learned policy $\pi$ may select actions outside the support of $\mu$, rendering value estimates unreliable---a phenomenon known as extrapolation error. Offline RL methods address this by constraining the learned policy to remain close to the dataset distribution or by penalising out-of-support actions~\cite{Fujimoto_2019, Kumar_2020, Wu_2019}.

\subsection{Constrained Markov Decision Processes}
A constrained Markov decision process (CMDP) extends the MDP framework by introducing a cost function and a safety budget. Formally, a CMDP is given by $(\mathcal{S}, \mathcal{A}, P, r, c, \gamma, d)$, where $c(s,a)$ is a cost function representing constraint violations and $d$ is an allowable threshold. The agent’s objective is to maximize $\mathbb{E}[R(\tau)]$ subject to $\mathbb{E}[C(\tau)] \le d$, where $C(\tau) = \sum_{t=0}^T \gamma^t c(s_t, a_t)$ \cite{Altman_1998}. This formulation balances performance and safety; for example, an autonomous vehicle should minimize accidents while still reaching its destination efficiently. A common approach to solving CMDPs is Lagrangian relaxation, where the problem is converted to maximizing $\mathbb{E}[R(\tau) - \lambda C(\tau)]$ with the Lagrange $\lambda$ adjusted until the cost constraint is satisfied \cite{Achiam_2017, review_safeRL}.

\section{Related Work}
\paragraph{Offline Safe Reinforcement Learning.}
Offline safe RL combines the CMDP formalism with offline training on fixed 
datasets \cite{Polosky_2022, Xu_2022}. Conservative offline RL methods such 
as BCQ \cite{Fujimoto_2019} address distributional shift by constraining the 
learned policy to remain close to the behavior policy. Constrained extensions 
such as COPO \cite{Polosky_2022} and CPQ \cite{Xu_2022} incorporate 
Lagrangian penalties or conservative cost critics into this framework. A key 
limitation in previous studies is the assumption that per-step cost 
annotations are available in the dataset, an assumption we relax.

\paragraph{Cost Inference from Sparse Feedback.}
In practice, safety supervision is often sparse. \cite{Low_2025} propose 
TraCeS, which learns a dense safety scoring model from sparse trajectory 
labels using a multiplicative decomposition, but requires online labeler 
queries during training for iterative refinement. Similarly, \cite{Chirra_2024} 
introduce RLSF, which transforms segment-level feedback into classification 
targets, but also relies on interactive labeler access during online learning. 
These settings differ fundamentally from ours: the dataset is fixed, the 
labeler cannot be queried post-hoc, and all safety supervision must be 
incorporated offline. 

\paragraph{Return Decomposition and Credit Assignment.}
Return decomposition methods address credit assignment in sparse reward RL 
by training sequence models to predict cumulative returns and redistributing 
terminal signals backward through time \cite{Arjona-Medina_2019, Zhang_2023}. 
These methods guarantee return equivalence: the redistributed per-step signals 
sum to the original trajectory return. Our work transfers this principle to 
the cost and constraint domain, establishing that return-equivalent cost 
redistribution preserves the feasible policy set and the optimal Lagrangian 
in a CMDP.

\section{Redistribution-based Cost Inference}
\label{sec:methods}

We consider an offline dataset $\mathcal{D} = \{\tau_i\}_{i=1}^{N}$ of trajectories where each trajectory $\tau = \{(s_0,a_0,r_0), \ldots, (s_T,a_T,r_T)\}$ is labeled by an expert function $\mathcal{F}(\tau)$ returning the index of the first unsafe transition, or the empty set if the trajectory is safe. This produces a sparse cost signal:
\begin{equation}
c^{\text{sparse}}(s_t,a_t) =
\begin{cases}
1, & \text{if } t \in \mathcal{F}(\tau); \\
0, & \text{otherwise}.
\end{cases}
\end{equation}
The objective is to learn a policy $\pi$ maximizing expected return $\mathbb{E}_{\pi}[R(\tau)]$ subject to $\mathbb{E}_{\pi}[C(\tau)] \leq d$ for a specified budget $d$, using only $\mathcal{D}$ and sparse cost annotations, within the CMDP framework with soft constraints~\cite{Altman_1998, Garcia_2015}. The central challenge is that stop-feedback provides trajectory-level information---indicating that something went wrong---but not dense per-step annotations of which actions caused the failure, creating a severe credit assignment problem~\cite{Arjona-Medina_2019}.


We propose Redistribution-based Cost Inference (RCI), a modular framework that decomposes safe offline RL into three stages: (1)~trajectory-level stop-feedback collection, (2)~return-decomposition-based cost inference, and (3)~constrained offline policy learning. The key insight is that credit assignment for costs and policy learning for safety are algorithmically distinct problems that benefit from separate treatment. This modularity allows any return-equivalent decomposition method and any constrained offline RL algorithm to be substituted independently.

\paragraph{Stage 1: Expert Feedback.}
For each trajectory $\tau_i$, an expert provides binary feedback: $C(\tau_i) = 1$ if the trajectory contains an unsafe transition, and $C(\tau_i) = 0$ otherwise. Unsafe trajectories are truncated at the first violation point $t^*$. This labeling protocol reflects how safety monitoring operates in practice---automated systems or human supervisors issue stop commands at the first detected violation~\cite{Poletti_2023}---and is far more scalable than dense per-step annotation.

\paragraph{Stage 2: Return Decomposition.}
We train a sequence model $\hat{C}(s_{0:t}, a_{0:t})$ to predict episodic cost $C(\tau)$ from trajectory prefixes by minimizing $\mathcal{L} = \mathbb{E}_{\tau \sim \mathcal{D}}[(\hat{C}(s_{0:T}, a_{0:T}) - C(\tau))^2]$. Dense per-step costs are then obtained by differencing successive predictions:
\begin{equation}
\tilde{c}_t = \hat{C}(s_{0:t},a_{0:t}) - \hat{C}(s_{0:t-1},a_{0:t-1}) + \delta_t, \quad \hat{C}(s_{0:-1},a_{0:-1}) = 0,
\label{eq:cost_redistribution}
\end{equation}
where $\delta_T = C(\tau) - \hat{C}(s_{0:T},a_{0:T})$ at the terminal step (with $\delta_t = 0$ for $t < T$) is a compensation term ensuring exact return equivalence (Definition~\ref{def:return_equiv}). The intuition is straightforward: if the model's cost prediction increases sharply at timestep $t$, that transition contains information critical to predicting eventual failure. Following prior work~\cite{Arjona-Medina_2019}, we instantiate $\hat{C}$ as an LSTM, though any sequence architecture (e.g., Transformers) would suffice.


\paragraph{Theoretical Results.}
A redistributed cost $\tilde{c}$ is return-equivalent to the sparse cost if $\sum_{t=0}^{T} \tilde{c}_t = C(\tau)$ for every trajectory (Definition~\ref{def:return_equiv}). The costs in Equation~\eqref{eq:cost_redistribution} satisfy this by construction via a telescoping sum (Lemma~\ref{lem:return_equiv}). This property cascades through the constrained optimization (formal statements and proofs are provided in Appendix~\ref{app:theoretical_res}):

\begin{proposition}[Constraint Equivalence]
\label{prop:constraint_equiv}
If $\tilde{c}$ is return-equivalent to $c^{\emph{sparse}}$, then for any policy $\pi$:
\begin{equation}
\mathbb{E}_{\tau \sim \pi}\left[\sum_{t=0}^{T} \tilde{c}_t\right] = \mathbb{E}_{\tau \sim \pi}\left[C(\tau)\right].
\end{equation}
\end{proposition}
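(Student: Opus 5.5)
The argument reduces the statement to pathwise equality plus linearity of expectation. By Definition~\ref{def:return_equiv}, return-equivalence means that $\sum_{t=0}^{T}\tilde{c}_t = C(\tau)$ holds for every trajectory $\tau$, not just on average over $\mathcal{D}$. For the construction in Equation~\eqref{eq:cost_redistribution}, Lemma~\ref{lem:return_equiv} establishes this by telescoping. The successive differences sum to $\hat{C}(s_{0:T},a_{0:T})-\hat{C}(s_{0:-1},a_{0:-1})=\hat{C}(s_{0:T},a_{0:T})$. The compensation term $\delta_T$ then adds $C(\tau)-\hat{C}(s_{0:T},a_{0:T})$, which leaves exactly $C(\tau)$. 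I take this identity as given.

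\textbf{Step 1.} Define the two trajectory functionals $f(\tau)=\sum_{t=0}^{T}\tilde{c}_t(\tau)$ and $g(\tau)=C(\tau)$. These are measurable functions of the trajectory, where $T$ is the (possibly random, truncation-dependent) horizon. By hypothesis, $f(\tau)=g(\tau)$ for every $\tau$ in the support of any trajectory distribution.

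\textbf{Step 2.} Fix an arbitrary policy $\pi$, which together with $P$ induces a distribution $p_\pi$ over trajectories. Since $f\equiv g$ pointwise, $\mathbb{E}_{\tau\sim p_\pi}[f(\tau)]=\mathbb{E}_{\tau\sim p_\pi}[g(\tau)]$ immediately, and no property of $\pi$ is used.

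\textbf{Step 3.} Check integrability so that both expectations are well defined. $C(\tau)\in\{0,1\}$ is bounded, so its expectation is finite, and since $f=g$ pointwise, $f$ is bounded as well.

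The only real obstacle is the scope of return-equivalence. Both $\tilde{c}$ and $C$ must be defined for trajectories generated by $\pi$, not only for those in $\mathcal{D}$. I would handle this by treating the redistribution $\tilde{c}_t$ as the deterministic map from Equation~\eqref{eq:cost_redistribution} applied to any trajectory prefix, with $\delta_T$ computed from that trajectory's own label $C(\tau)$. Under that reading, the telescoping identity holds for every trajectory, including off-dataset ones, and the proposition follows.

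The undiscounted sum in the statement matches Definition~\ref{def:return_equiv}. If a discounted version were desired, it would require discounted return-equivalence instead, and I would note this as a remark rather than prove it.
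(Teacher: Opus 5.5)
Your proof is correct and follows the paper's argument: both take the pointwise identity $\sum_{t=0}^{T}\tilde{c}_t = C(\tau)$, which is the return-equivalence hypothesis (verified for the construction by Lemma~\ref{lem:return_equiv}), and carry it through the expectation over trajectories induced by $\pi$. Your integrability check and your remark that the redistribution map must be applied to off-dataset trajectories, with $\delta_T$ taken from each trajectory's own label, are sensible additions that the paper leaves implicit.
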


\begin{theorem}[Policy Invariance]
\label{thm:policy_invariance}
Let $\mathcal{M}^{\emph{sparse}}$ and $\mathcal{M}^{\emph{redist}}$ denote CMDPs differing only in instantaneous costs. Then the feasible policy sets are identical and $\pi^*_{\emph{sparse}} = \pi^*_{\emph{redist}}$. Moreover, for any $\lambda \geq 0$:
\begin{equation}
\mathcal{L}(\pi,\lambda;\tilde{c}) = \mathcal{L}(\pi,\lambda;c^{\emph{sparse}}),
\end{equation}
so the Lagrangian saddle point $(\pi^*,\lambda^*)$ coincides under both formulations.
\end{theorem}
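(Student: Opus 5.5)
The plan is to reduce everything to Proposition~\ref{prop:constraint_equiv}. The two CMDPs $\mathcal{M}^{\text{sparse}}$ and $\mathcal{M}^{\text{redist}}$ share $(\mathcal{S},\mathcal{A},P,r,\gamma,d)$, so the reward objective $J_r(\pi)=\mathbb{E}_{\tau\sim\pi}[R(\tau)]$ is the same function of $\pi$ in both. The only place the instantaneous cost enters is through the constraint functional. Write $J_{\tilde c}(\pi)=\mathbb{E}_{\tau\sim\pi}[\sum_t \tilde c_t]$ and $J_{\text{sparse}}(\pi)=\mathbb{E}_{\tau\sim\pi}[C(\tau)]$. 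Proposition~\ref{prop:constraint_equiv} gives $J_{\tilde c}(\pi)=J_{\text{sparse}}(\pi)$ for every $\pi$, and this is the single fact the whole argument rests on.

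\textbf{Step 1 (feasible sets).} The feasible set of $\mathcal{M}^{\text{redist}}$ is $\Pi_{\text{redist}}=\{\pi: J_{\tilde c}(\pi)\le d\}$, and that of $\mathcal{M}^{\text{sparse}}$ is $\Pi_{\text{sparse}}=\{\pi: J_{\text{sparse}}(\pi)\le d\}$. Pointwise equality of the two functionals gives $\Pi_{\text{redist}}=\Pi_{\text{sparse}}$.

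\textbf{Step 2 (optimal policies).} Both problems maximize the same objective $J_r$ over the same set, so $\arg\max_{\Pi_{\text{redist}}}J_r=\arg\max_{\Pi_{\text{sparse}}}J_r$. I will state $\pi^*_{\text{sparse}}=\pi^*_{\text{redist}}$ as equality of the optimal sets, or equality under a common tie-breaking rule if the maximizer is not unique.

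\textbf{Step 3 (Lagrangian and saddle points).} Define $\mathcal{L}(\pi,\lambda;c)=J_r(\pi)-\lambda\,(J_c(\pi)-d)$. For fixed $\lambda\ge 0$, substituting Proposition~\ref{prop:constraint_equiv} shows the two Lagrangians agree as functions on $\Pi\times[0,\infty)$. The saddle-point condition
\[
\mathcal{L}(\pi,\lambda^*)\le\mathcal{L}(\pi^*,\lambda^*)\le\mathcal{L}(\pi^*,\lambda)
\]
depends only on this function, so the set of saddle points is identical under both formulations. The dual function and the dual optimum $\lambda^*$ coincide as well.

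\textbf{Main obstacle.} The main difficulty is making precise in what sense the expectation in Proposition~\ref{prop:constraint_equiv} is taken, since $\tilde c$ is defined on trajectories in $\mathcal{D}$ via the learned $\hat C$. Return equivalence must hold for every trajectory that $\pi$ can generate, not just the dataset ones. I would handle this by taking the hypothesis as stated, namely that $\tilde c$ is return-equivalent for all trajectories. This is guaranteed by the telescoping construction with the terminal compensation $\delta_T$ (Lemma~\ref{lem:return_equiv}), which is defined for any trajectory.

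A second subtlety is discounting. The sparse $C(\tau)$ is undiscounted while CMDP costs use $\gamma^t$. I will either work with the undiscounted episodic constraint throughout, or assume that return equivalence is defined with the same discounting as the constraint functional. I would state this convention explicitly before Step 1.
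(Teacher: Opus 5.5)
Your proposal is correct and follows the paper's own route. Both arguments reduce everything to the pointwise identity of Proposition~\ref{prop:constraint_equiv}, conclude that the feasible sets and hence the constrained optima coincide (reward, dynamics and discount are shared), and then note that the two Lagrangians agree for every $(\pi,\lambda)$, so the saddle points coincide; your extra caveats about non-unique maximizers, about return equivalence having to hold on every trajectory a policy can generate rather than only on $\mathcal{D}$, and about the mismatch between the discounted $C(\tau)=\sum_t\gamma^t c_t$ and the undiscounted sum in Definition~\ref{def:return_equiv} are points the paper leaves implicit, and stating them explicitly makes the argument sounder.
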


Crucially, these guarantees hold regardless of the sequence model's prediction accuracy due to the compensation term $\delta_T$. While both formulations define the same optimal policy, redistributed costs provide dense supervision throughout trajectories, yielding a better-conditioned regression target for the cost critic compared to sparse terminal signals (Remark~\ref{remark_2}).

\paragraph{Stage 3: Policy Optimization.}
Given the dense cost-augmented dataset $\mathcal{D}_{\text{dense}} = \{(s_t, a_t, r_t, \tilde{c}_t)\}$, we train a constrained offline RL policy. In our instantiation, we employ BCQ-Lagrangian~\cite{Fujimoto_2019, Liu_2023}, which addresses distributional shift through a VAE-based behavior cloning constraint and enforces safety via an adaptive Lagrange multiplier. The Bellman target becomes $y = r(s,a) + \gamma \max_{a'} [Q_{\theta'}(s',a') - \lambda\,\tilde{c}(s',a')]$, where $\lambda$ is updated as $\lambda \leftarrow \max(0, \lambda + \alpha_\lambda(C_{\text{batch}} - d))$ based on batch-average cost relative to the budget $d$. This choice is well-motivated but not essential to RCI.

\section{Experiments}
\label{exp_setup}

We evaluate RCI on two benchmark environments: \texttt{HighwayEnv}~\cite{highwayenv}, where an ego vehicle navigates highway traffic while avoiding collisions, and \texttt{Safe-FetchReach}~\cite{robotics}, where a 7-DOF robotic arm reaches target positions while avoiding a spherical hazard region. For each environment we generate 5{,}000 offline episodes using PPO-trained behavioral policies that optimize task reward while disregarding safety~\cite{Schulman_2017}, producing aggressive driving with frequent collisions and hazard-ignoring reaching behavior. An automated evaluator identifies the first unsafe transition in each trajectory and truncates all subsequent steps, mimicking expert intervention~\cite{Poletti_2023}: a transition is labeled unsafe when the ego approaches within 0.2 units of another vehicle in \texttt{HighwayEnv}, or when the end-effector enters the spherical hazard region $\mathcal{H} = \{p \in \mathbb{R}^3 : \|p - h\|_2 \leq r\}$ in \texttt{Safe-FetchReach}. 

\textbf{Baselines and Protocol.} We compare against three baselines under identical BCQ-Lagrangian architectures~\cite{Fujimoto_2019}: (1) \textit{Reward-Only}, which ignores safety costs and sets the budget to infinity; (2) \textit{Sparse}, which uses the raw terminal cost label without redistribution; and (3) \textit{Hazard}, a two-head binary classifier $\tilde{c}(s_t, a_t) = P_1(s_t, a_t) + P_2(s_t, a_t)$, where $P_1$ predicts whether a state-action pair appears in any unsafe trajectory and $P_2$ predicts the unsafe termination point specifically, trained with focal loss to address class imbalance~\cite{Lin2017}. For RCI we instantiate RUDDER-based redistribution with an LSTM sequence model~\cite{Arjona-Medina_2019}. For each method we sweep the safety budget $d$ over the 10th-50th percentiles of the dataset's trajectory cost distribution in increments of 10, train three independent policies per configuration, and evaluate the lowest-violation policy over 1{,}000 online episodes using ground-truth safety labels. 

\subsection{Safe Navigation}

Figure~\ref{fig:highway_results} reports normalized return $R_{\text{norm}} = (R - R_{\min}) / (R_{\max} - R_{\min})$ pooled across all evaluated policies and seeds just as in ~\cite{Liu_2023}, and violation rate on \texttt{HighwayEnv} under the mixed-composition dataset (equal-proportioned PPO and uniform random rollouts). RCI cuts violation rate compared to Sparse and Hazard while maintaining task return; a two-sample t-test against the unconstrained baseline yields $t = 0.9962$, $p = 0.3483$, indicating no statistically significant return penalty.

\begin{figure}[!h]
    \centering
    \subfloat[\texttt{HighwayEnv} results\label{fig:highway_results_core}]{%
        \includegraphics[width=0.48\linewidth]{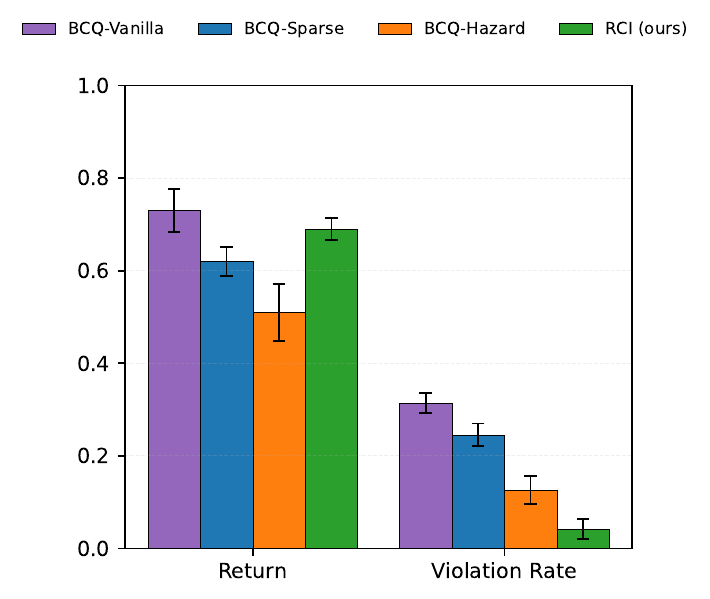}}
    \subfloat[Spatial cost landscapes\label{fig:ccval_heatmap}]{%
        \includegraphics[width=0.48\linewidth]{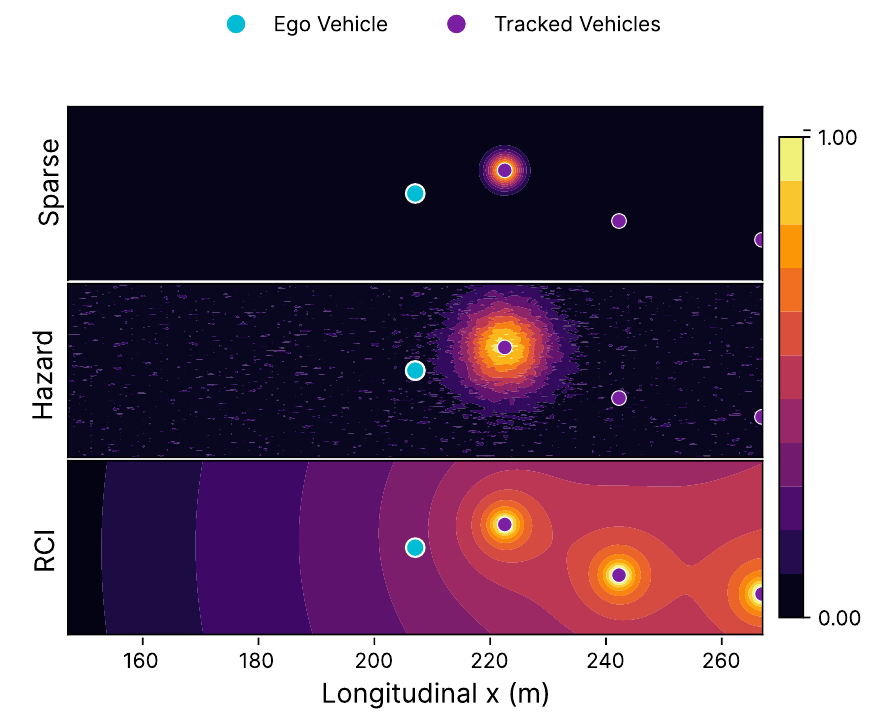}}
    \caption{Performance on \texttt{HighwayEnv}. (a) Normalized return and violation rate across baselines and RCI; bars show means over 5 seeds with standard error. The two-sample t-test between RCI and BCQ-Vanilla yields $t = 0.9962$, $p = 0.3483$. (b) Learned cost critics as heatmaps over road geometry for Sparse, Hazard, and RCI (top to bottom); warmer colors indicate higher predicted cumulative cost.}
    \label{fig:highway_results}
\end{figure}

\textbf{Learned spatial risk structure.} Figure~\ref{fig:highway_results}(b) shows learned cost critics evaluated over the \texttt{HighwayEnv} road geometry. Sparse concentrates all cost at the terminal unsafe configuration, with near-zero values elsewhere, even in states already dangerously close to other vehicles. Hazard's classifier captures near-terminal danger in a small neighborhood but decays rapidly. RCI produces globally coherent spatial structure: cost increases gradually as the ego approaches traffic, with high-cost regions extending backward along approach corridors. The redistributed costs treat vehicle proximity as a graded risk signal modulated by temporal context, confirming that the dense supervision recovers meaningful temporal causality from a single terminal bit.

We report two additional experiments on \texttt{HighwayEnv} that probe RCI's behavior under realistic deployment conditions: varying data sources and imperfect supervision.

\textbf{Dataset Composition}
%
We evaluate RCI across three behavioral policies with distinct exploration patterns: (i) \textit{PPO}, task-optimized policies producing goal-directed rollouts concentrated around high-reward regions; (ii) \textit{Random}, uniform random action sampling; and (iii) \textit{Mixed}, equal-proportion combinations of PPO and Random rollouts, reflecting realistic offline datasets that aggregate multiple data sources.

\begin{figure}[!h]
    \centering
    \subfloat[Normalized return\label{fig:dc_return}]{%
        \includegraphics[width=0.48\linewidth]{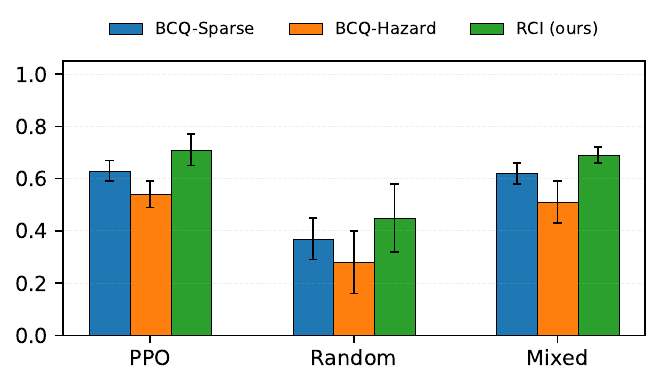}}
    \hfill
    \subfloat[Violation rate\label{fig:dc_violation}]{%
        \includegraphics[width=0.48\linewidth]{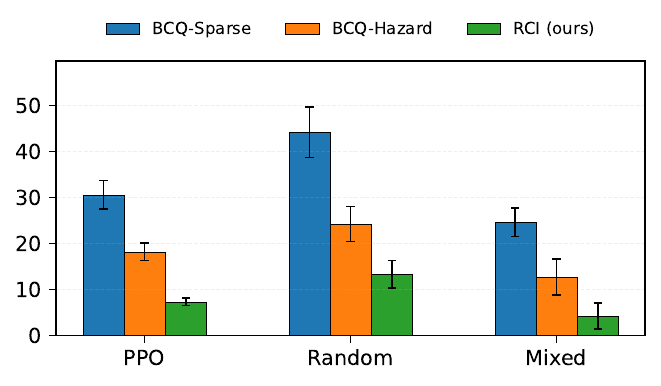}}
    \caption{Dataset composition ablation on \texttt{HighwayEnv}. Bars show means over 3 seeds with standard error; each policy is evaluated for 1{,}000 episodes per seed using ground-truth violations. Safety budget and model selection rule are held fixed across methods.}
    \label{fig:data_composition}
\end{figure}

RCI's violation reduction holds across all three regimes (Figure~\ref{fig:data_composition}), indicating that the redistribution mechanism does not depend on the behavior policy producing structured or near-optimal exploration.

\textbf{Label Noise}
%
We simulate two corruption regimes. \textit{Noisy labels}: for each unsafe trajectory, the stop-point index is shifted by $\delta$ steps, where $\delta$ is uniformly drawn from $[-15, 15]$ in increments of 5, subject to trajectory bounds. This perturbs the termination label earlier (false positive) or later (false negative) relative to the true unsafe transition, simulating limited-precision human or automated annotation. \textit{Adversarial labels}: safety labels of a random 20\% of trajectories are flipped: unsafe trajectories are relabeled safe (label removed), and safe trajectories are relabeled unsafe with a stop label placed at a uniformly random timestep. Cost budget $d$ is fixed to the value selected under clean supervision to isolate the effect of corruption.

\begin{figure}[!h]
    \centering
    \subfloat[Normalized return\label{fig:ln_return}]{%
        \includegraphics[width=0.48\linewidth]{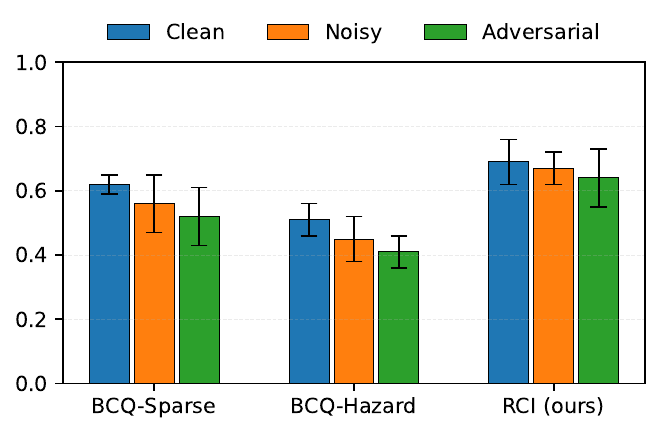}}
    \hfill
    \subfloat[Violation rate\label{fig:ln_violation}]{%
        \includegraphics[width=0.48\linewidth]{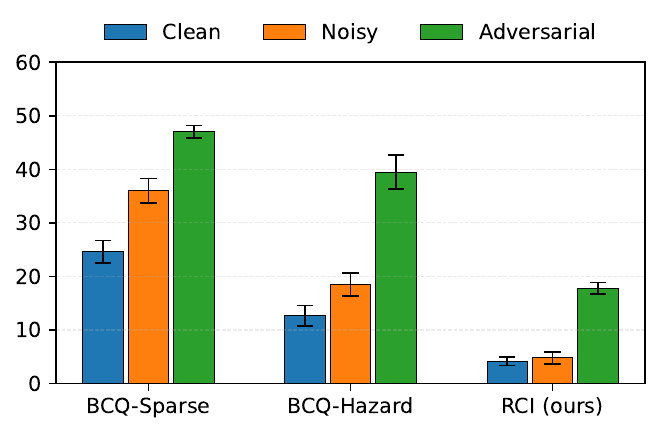}}
    \caption{Label-noise ablation on \texttt{HighwayEnv}, with cost budget $d$ fixed to the value selected in the main results (Figure~\ref{fig:highway_results}). Error bars show standard error across 3 seeds; evaluation uses 1{,}000 episodes and ground-truth safety events.}
    \label{fig:label_noise}
\end{figure}

Returns remain stable across noise regimes (Figure~\ref{fig:ln_return}) since termination-time noise alters constraint signals without affecting logged rewards. Violation rates on Sparse and Hazard respond more sharply to misaligned supervision than RCI (Figure~\ref{fig:ln_violation}), consistent with the smoothing effect of return decomposition over per-step label errors.

\subsection{Safe Continuous Control} 
On \texttt{Safe-FetchReach} (Figure~\ref{fig:reacher_results}), we observe the same trend: RCI achieves competitive returns while substantially reducing violation rates. The vector field shows strong repulsive behavior away from the hazard region under a restrictive budget, demonstrating that spatially coherent avoidance emerges from trajectory-level stop-feedback alone, without access to dense cost annotations.

\begin{figure}[htb]
    \centering
    \subfloat[\texttt{Safe-FetchReach} results\label{fig:reacher_results_core}]{%
        \includegraphics[width=0.48\linewidth]{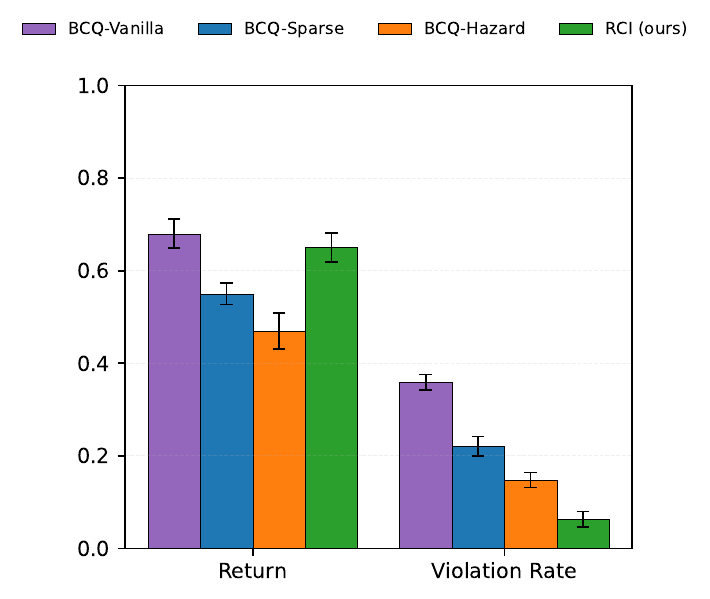}}
    \subfloat[Policy vector field\label{fig:vf_results}]{%
        \includegraphics[width=0.42\linewidth]{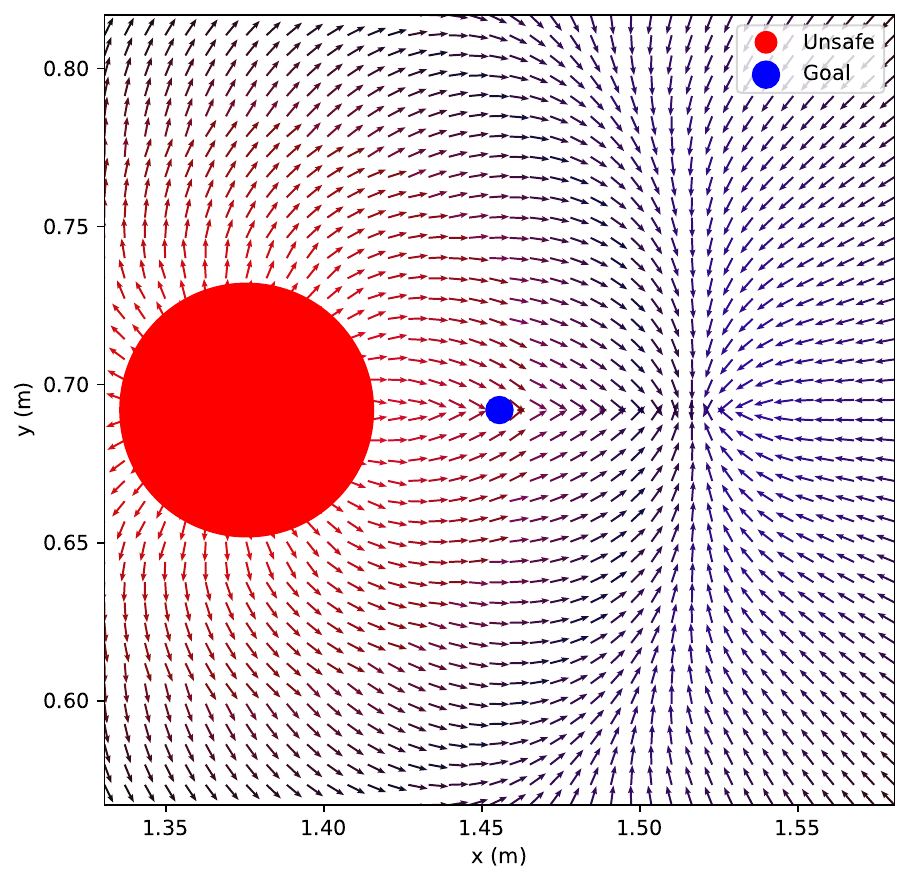}}
    \caption{Performance on \texttt{Safe-FetchReach}. (a) Normalized return and violation rate; bars show means over 5 seeds with standard error. (b) Vector field of RCI policy under a restrictive safety budget; arrows show action directions, color encodes critic value estimates.}
    \label{fig:reacher_results}
\end{figure}

\section{Conclusion}
\label{sec:conclusion}

RCI converts trajectory-level stop-feedback into dense per-step costs via return decomposition, enabling constrained offline policy learning from the supervision modality physical AI deployments actually produce. The transformation preserves the feasible policy set and Lagrangian saddle point of the underlying CMDP, and reduces violation rates roughly fivefold on two physical-safety domains without statistically significant return penalties. Stop-feedback is informationally sufficient for safe offline policy learning; richer supervision modalities are not prerequisites.

Three limitations matter. RCI inherits standard offline RL coverage requirements: datasets skewed toward unsafe trajectories risk overly conservative policies, and sparse safe coverage may leave policy optimization underspecified. The redistribution mechanism captures statistical, not causal, associations between trajectory prefixes and violations, identifying risky situations rather than causally hazardous actions. The framework is currently restricted to single binary constraints; multi-constraint extension via per-channel decomposition is straightforward but raises open questions about balancing competing objectives. Future directions include graduated severity feedback, uncertainty-aware redistribution under distributional shift, and more expressive sequence architectures.

\bibliographystyle{IEEEtran}
\bibliography{ref}

\appendices

\section{Signal Preservation}
\label{app:theoretical_res}
Algorithm~\ref{alg:rci} summarizes the complete RCI framework. The framework takes as input an offline dataset $\mathcal{D}$, a labeler $\mathcal{L}$ (human or automated), a return decomposition algorithm $\mathcal{A}_{\text{decomp}}$, a constrained offline RL algorithm $\mathcal{A}_{\text{OSRL}}$, and a cost budget $d$. It outputs a safe policy $\pi$.

\begin{algorithm}[htb]
\caption{RCI Framework}
\label{alg:rci}
\begin{algorithmic}[1]
\STATE \textbf{Input:} Dataset $\mathcal{D} = \{\tau_i\}_{i=1}^N$, Labeler $\mathcal{L}$, Return decomposition algorithm $\mathcal{A}_{\text{decomp}}$, Constrained Offline RL algorithm $\mathcal{A}_{\text{OSRL}}$, Cost budget $d$
\STATE \textbf{Output:} Safe policy $\pi$
\FOR{each trajectory $\tau_i \in \mathcal{D}$} 
    \STATE $F(\tau_i) \leftarrow \mathcal{L}(\tau_i)$ \hfill $\triangleright$ Feedback Collection
    \STATE $C(\tau_i) \leftarrow \begin{cases} 
        1 & \text{if } F(\tau_i) \neq \emptyset \\
        0 & \text{otherwise}
    \end{cases}$
    \FOR{$t = 0, 1, \ldots, T_i$}
        \STATE $c^{\text{sparse}}(s_t, a_t) \leftarrow \begin{cases}
            1 & \text{if } t \in F(\tau_i) \\
            0 & \text{otherwise}
        \end{cases}$
    \ENDFOR
\ENDFOR
\STATE $\hat{C} \leftarrow$ Train sequence model with $\mathcal{A}_{\text{decomp}}$ on $\{(\tau_i, C(\tau_i))\}$ \hfill $\triangleright$ Return Decomposition
\FOR{each trajectory $\tau_i \in \mathcal{D}$}
    \FOR{$t = 0, 1, \ldots, T_i$}
        \STATE $\tilde{c}_t \leftarrow \hat{C}(s_{0:t},a_{0:t}) - \hat{C}(s_{0:t-1},a_{0:t-1})$ \hfill $\triangleright$ with $\hat{C}(s_{0:-1},a_{0:-1}) = 0$
    \ENDFOR
    \STATE $\delta_i \leftarrow C(\tau_i) - \hat{C}(s_{0:T_i}, a_{0:T_i})$ \hfill $\triangleright$ Return-equivalence correction
    \STATE $\tilde{c}_{T_i} \leftarrow \tilde{c}_{T_i} + \delta_i$
\ENDFOR
\STATE $\mathcal{D}_{\text{dense}} \leftarrow \{(s_t, a_t, r_t, \tilde{c}_t)\}$ \hfill $\triangleright$ Constrained Policy Learning
\STATE $\pi \leftarrow \mathcal{A}_{\text{OSRL}}(\mathcal{D}_{\text{dense}}, d)$
\RETURN $\pi$
\end{algorithmic}
\end{algorithm}

A critical question still arises: does this redistribution preserve the original supervision signal? In the offline setting, where no additional feedback can be queried, any transformation of the cost signal must guarantee that the resulting constrained optimization problem yields the same optimal policy. We now establish this formally.

\begin{definition}[Return-Equivalent Cost Redistribution]
\label{def:return_equiv}
A redistributed cost function $\tilde{c}: \mathcal{S} \times \mathcal{A} \to \mathbb{R}$ is return-equivalent to the sparse cost if for every trajectory $\tau$, the sum of redistributed costs equals the episodic cost: $\sum_{t=0}^{T} \tilde{c}_t = C(\tau)$.
\end{definition}

\begin{lemma}[Return Equivalence]
\label{lem:return_equiv}
The redistributed costs defined in Equation~\eqref{eq:cost_redistribution} satisfy Definition~\ref{def:return_equiv} for any sequence model $\hat{C}$.
\end{lemma}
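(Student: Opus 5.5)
The plan is a direct telescoping computation. I will fix an arbitrary trajectory $\tau$ of length $T$ and an arbitrary sequence model $\hat{C}$, and abbreviate $\hat{C}_t := \hat{C}(s_{0:t},a_{0:t})$ for $t \ge 0$ with the convention $\hat{C}_{-1} = 0$ from Equation~\eqref{eq:cost_redistribution}. The only property of $\hat{C}$ I use is that it assigns a finite real number to every prefix. This is why the statement can hold ``for any sequence model'': no training objective, accuracy, or monotonicity assumption enters.

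Next I sum Equation~\eqref{eq:cost_redistribution} over $t=0,\dots,T$ and split the sum into the differencing part and the compensation part. The differencing part telescopes:
\begin{equation*}
\sum_{t=0}^{T} \bigl(\hat{C}_t - \hat{C}_{t-1}\bigr) = \hat{C}_T - \hat{C}_{-1} = \hat{C}_T .
\end{equation*}
The compensation part collapses to a single term, since $\delta_t = 0$ for $t<T$, so $\sum_{t=0}^T \delta_t = \delta_T = C(\tau) - \hat{C}_T$. Adding the two gives $\sum_{t=0}^T \tilde{c}_t = \hat{C}_T + C(\tau) - \hat{C}_T = C(\tau)$, which is exactly Definition~\ref{def:return_equiv}. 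Since $\tau$ was arbitrary, this holds for every trajectory.

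There is no real obstacle here. The one point needing care is bookkeeping: the correction $\delta_T$ must be evaluated with the same $T$ (the truncation point $t^*$ for unsafe trajectories) as the final prefix prediction $\hat{C}_T$, so that the two $\hat{C}_T$ terms cancel exactly. I will also remark that Algorithm~\ref{alg:rci} implements the same construction, first computing the differences and then adding $\delta_i$ to $\tilde{c}_{T_i}$, so the lemma covers the implemented costs as well. A minor caveat is that Definition~\ref{def:return_equiv} nominally describes $\tilde{c}$ as a function on $\mathcal{S}\times\mathcal{A}$, whereas $\tilde{c}_t$ here depends on the prefix. I will read $\tilde{c}_t$ as a per-trajectory, per-step label, which is how Definition~\ref{def:return_equiv} uses it.
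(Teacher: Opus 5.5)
Your proposal is correct and follows the paper's proof: sum Equation~\eqref{eq:cost_redistribution} over $t=0,\dots,T$, telescope the differences to $\hat{C}_T - \hat{C}_{-1}$, apply the boundary condition $\hat{C}_{-1}=0$, and let $\delta_T = C(\tau)-\hat{C}_T$ cancel the remaining prediction term. Your side remark is also fair: $\tilde{c}_t$ is really a prefix-dependent per-step label rather than a function on $\mathcal{S}\times\mathcal{A}$, a point the paper glosses over, but it does not affect the argument.
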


\begin{proof}
Summing across all timesteps yields a telescoping series:
\begin{align}
\sum_{t=0}^{T} \tilde{c}_t &= \sum_{t=0}^{T} \left[ \hat{C}(\tau_{0:t}) - \hat{C}(\tau_{0:t-1}) \right] + \delta_T \\
&= \hat{C}(\tau_{0:T}) - \hat{C}(\tau_{0:-1}) + \delta_T \\
&= \hat{C}(\tau_{0:T}) + \left[ C(\tau) - \hat{C}(\tau_{0:T}) \right] = C(\tau).
\end{align}
The boundary condition $\hat{C}(\tau_{0:-1}) = 0$ and the compensation term $\delta_T$ ensure exact equality regardless of prediction accuracy.
\end{proof}

This return equivalence property is essential in the offline setting where we cannot query for additional supervision. Unlike interactive learning where the agent can request more safety labels \cite{Low_2025, Chirra_2024}, offline learning must extract maximum information from fixed labels. Return equivalence ensures that the inferred costs preserve the original supervision signal without introducing systematic bias, while redistributing it temporally to guide per-step decision-making.

We now show that return equivalence implies policy invariance in the constrained MDP.


\begin{proof}
By Lemma~\ref{lem:return_equiv}, return equivalence implies $\sum_{t=0}^{T} \tilde{c}_t = C(\tau)$ for every trajectory $\tau$. Since this equality holds pointwise, the result follows by linearity of expectation.
\end{proof}

\begin{proof}
By Proposition~\ref{prop:constraint_equiv}, $\mathbb{E}_{\tau \sim \pi}[\sum_{t=0}^{T} \tilde{c}_t] = \mathbb{E}_{\tau \sim \pi}[C(\tau)]$ for any $\pi$, so the feasible sets are identical. Both CMDPs share identical state spaces, action spaces, transition dynamics, reward functions, and discount factors, and maximize the same expected return over identical feasible sets; hence their optimal solutions coincide.
\end{proof}

Since constrained MDPs are commonly solved via Lagrangian relaxation \cite{Altman_1998, Achiam_2017}, we verify that policy invariance extends to this optimization framework.


\paragraph{Lagrangian Equivalence.}
\label{thm:lagrangian}
For any policy $\pi$ and multiplier $\lambda \geq 0$, the Lagrangian evaluated using sparse costs equals the Lagrangian evaluated using return-equivalent redistributed costs. Consequently, the saddle point $(\pi^*, \lambda^*)$ is identical for both formulations.

\begin{proof}
By Proposition~\ref{prop:constraint_equiv}:
\begin{align}
\mathcal{L}(\pi, \lambda; \tilde{c}) &= \mathbb{E}_{\tau \sim \pi}[R(\tau)] - \lambda \left( \mathbb{E}_{\tau \sim \pi}\left[ \sum_{t=0}^T \tilde{c}_t \right] - d \right) \\
&= \mathbb{E}_{\tau \sim \pi}[R(\tau)] - \lambda \left( \mathbb{E}_{\tau \sim \pi}[C(\tau)] - d \right) = \mathcal{L}(\pi, \lambda; c^{\text{sparse}}).
\end{align}
Since the Lagrangian is identical for all $(\pi, \lambda)$, the saddle point problem yields the same solution under both cost specifications.
\end{proof}

These results collectively establish that RCI preserves supervision at multiple levels: trajectory-level (Lemma~\ref{lem:return_equiv}), policy-level (Proposition~\ref{prop:constraint_equiv}), constraint-level, and optimization-level (Theorem~\ref{thm:policy_invariance}). The transformation changes only the temporal allocation of costs---from sparse terminal signals to dense per-step signals---while preserving the aggregate constraint information and the set of optimal policies.

\begin{remark}
The compensation term $\delta_T$ ensures that these guarantees hold regardless of the sequence model's prediction accuracy. While prediction errors affect the distribution of costs across timesteps and may degrade credit assignment quality, they do not compromise supervision preservation. Under perfect prediction, the compensation term vanishes and the redistributed cost exactly quantifies how much each transition increased the probability of eventual violation.
\end{remark}

\begin{remark}
\label{remark_2}
Theorem~\ref{thm:policy_invariance} establish that the Lagrangian and optimal policy are identical under sparse and redistributed costs. However, practical algorithms do not optimize the true Lagrangian directly—they estimate cost expectations using learned critics. Under sparse costs, the cost critic observes non-zero signal only at terminal failure points, making it difficult to learn accurate cost Q-values for earlier states. This is a supervised learning problem with extreme sparsity. Redistributed costs provide dense supervision throughout trajectories, yielding a better-conditioned regression target for the critic. Both formulations define the same optimal policy, but dense costs enable more accurate critic estimation within a fixed sample budget.
\end{remark}

\section{Qualitative Analysis}

This section provides qualitative evidence supporting the quantitative results presented in Figure~\ref{fig:highway_results} and Figure~\ref{fig:reacher_results}. We visualize spatial cost landscapes learned by each method, examine policy trajectories under different safety budgets, and present vector fields of learned policies in the manipulation domain.

\subsection{\texttt{HighwayEnv}}
A central question motivating our approach is whether return decomposition produces meaningful dense cost signals that identify safety-critical steps. We analyze the quality of redistributed costs through visualization and quantitative assessment.

 Figure~\ref{fig:cost_redistribution} presents the cost redistribution process on randomly sampled  trajectories from \texttt{HighwayEnv}. Figure~\ref{fig:cost_sparse_samples} shows the sparse trajectory-level label (a single cost of 1 at the terminal unsafe step), while the Figure~\ref{fig:cost_dense_samples} shows the dense per-step costs inferred by redistribution via return decomposition.
\begin{figure}[h!]
    \centering
    \subfloat[Sparse samples\label{fig:cost_sparse_samples}]{%
        \begin{minipage}{\linewidth}
            \centering
            \includegraphics[width=0.32\linewidth]{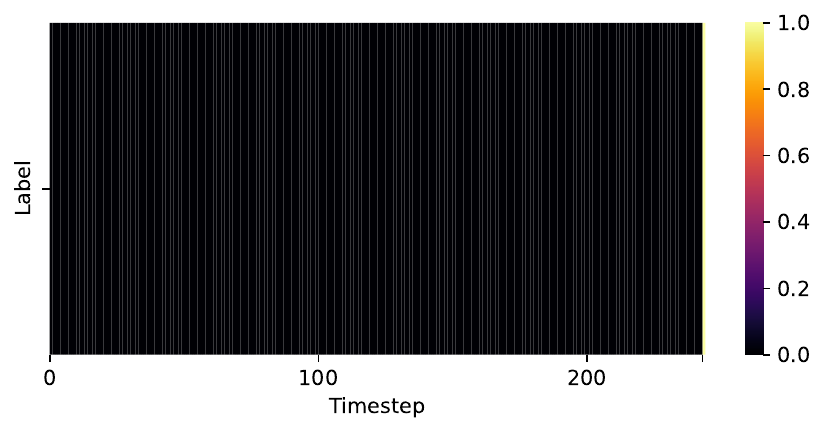}\hfill
            \includegraphics[width=0.32\linewidth]{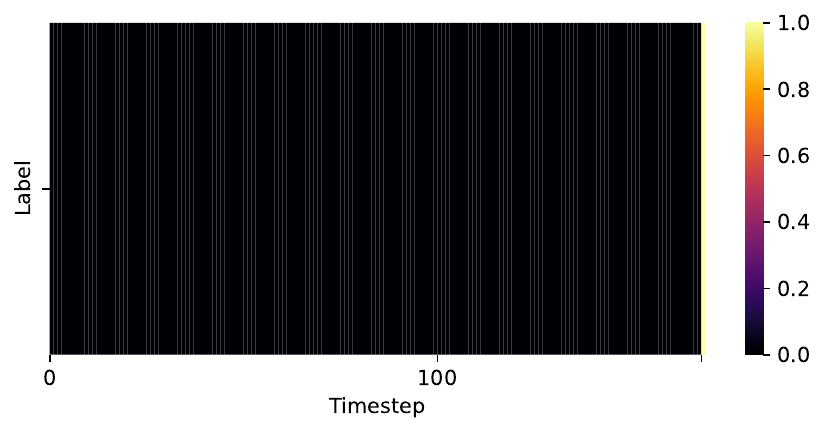}\hfill
            \includegraphics[width=0.32\linewidth]{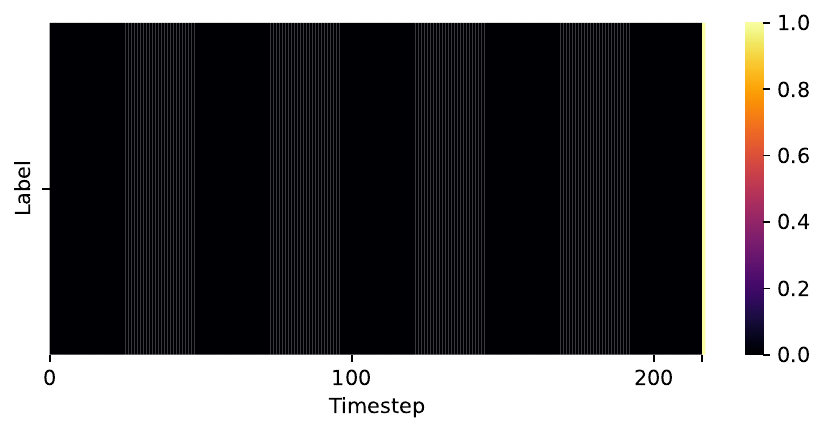}
        \end{minipage}}
    \vspace{0.7em}
    \subfloat[Dense samples\label{fig:cost_dense_samples}]{%
        \begin{minipage}{\linewidth}
            \centering
            \includegraphics[width=0.32\linewidth]{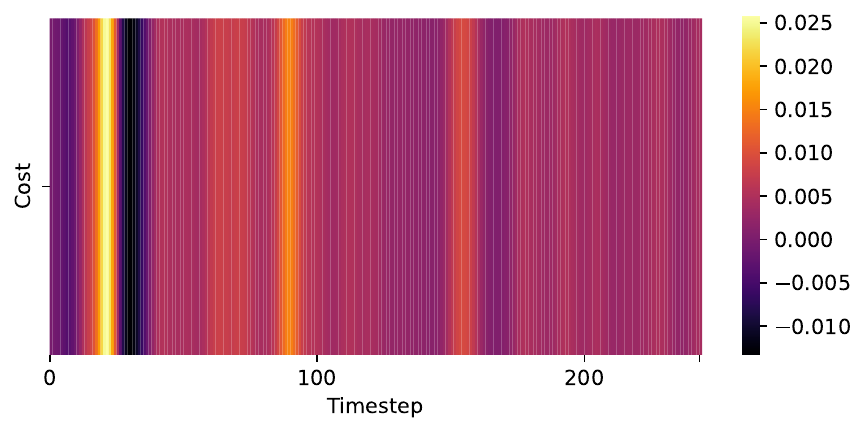}\hfill
            \includegraphics[width=0.32\linewidth]{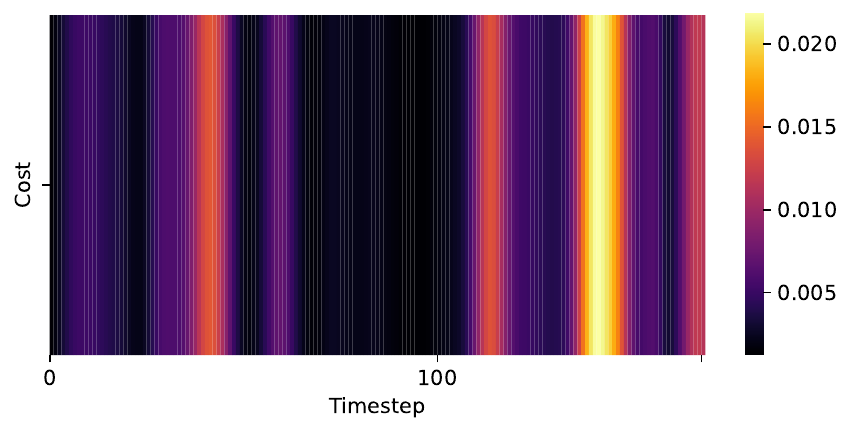}\hfill
            \includegraphics[width=0.32\linewidth]{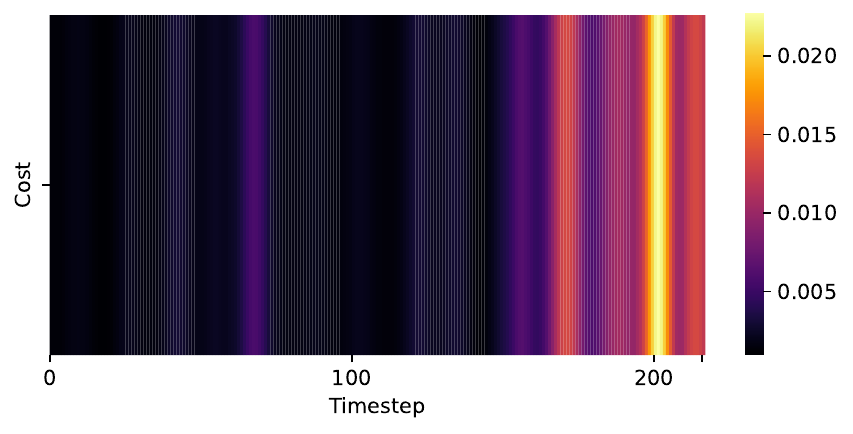}
        \end{minipage}}
    \vspace{0.7em}
    \subfloat[Distance to nearest vehicle\label{fig:distance_samples}]{%
        \begin{minipage}{\linewidth}
            \centering
            \includegraphics[width=0.32\linewidth]{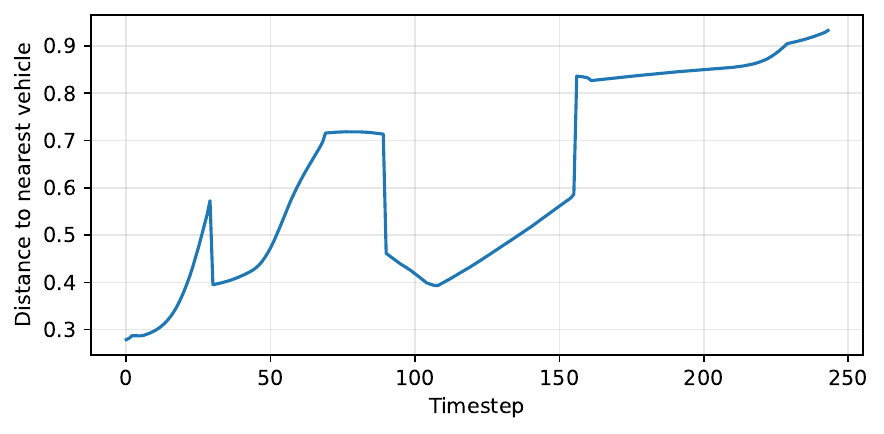}\hfill
            \includegraphics[width=0.32\linewidth]{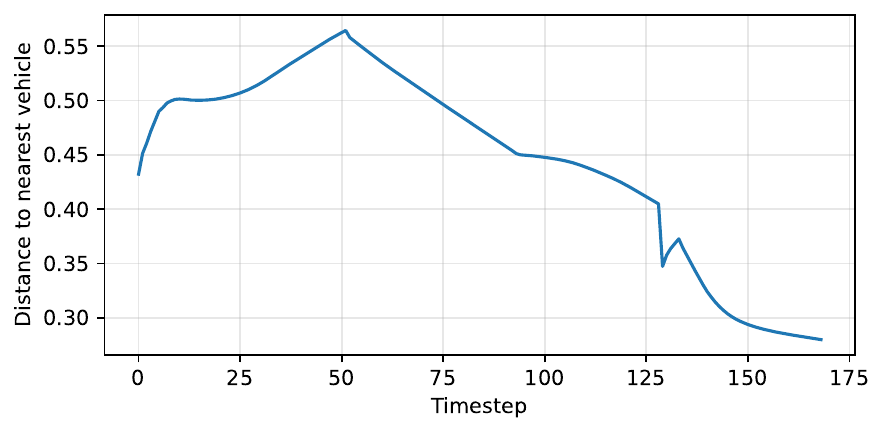}\hfill
            \includegraphics[width=0.32\linewidth]{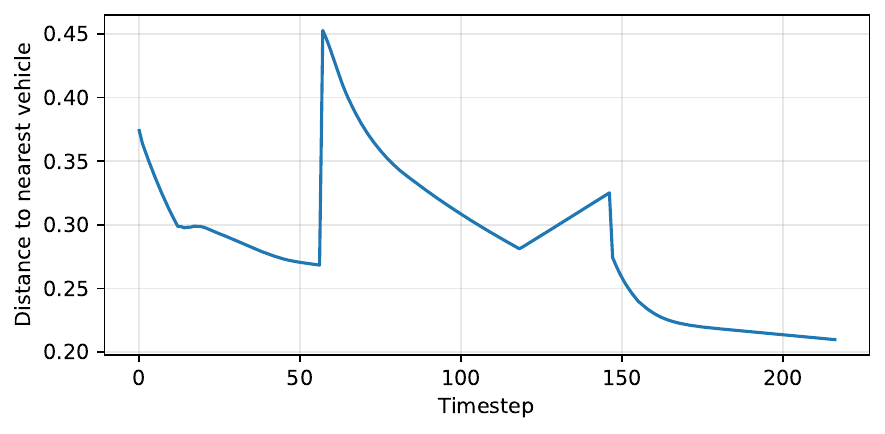}
        \end{minipage}}
    \caption{Cost inference on three sampled trajectories from \texttt{HighwayEnv}. (\ref{fig:cost_sparse_samples}) shows the sparse stop-feedback labels (a single terminal cost at the unsafe step). (\ref{fig:cost_dense_samples}) shows the dense per-step costs inferred via return decomposition, with higher values assigned to precursor actions that contributed to the eventual violation. (\ref{fig:distance_samples}) shows the distance to the nearest vehicle over time for the same trajectories; note that each trajectory is truncated at the first unsafe transition, as recorded in the offline dataset.}
    \label{fig:cost_redistribution}
\end{figure}

\paragraph{Policy Trajectories Under Different Safety Budgets.}
Under a restrictive budget ($d = P_{10}$), the policy maintains large following distances and exits the highway entirely; under a balanced budget ($d = P_{30}$), it sustains forward progress while respecting distance constraints. These rollouts confirm that redistributed costs yield interpretable safety--performance trade-offs without retraining the cost inference model.

\begin{figure}[h]
    \centering
    \subfloat[$d = P_{10}$\label{fig:restrictive_budget}]{%
        \includegraphics[width=0.45\linewidth]{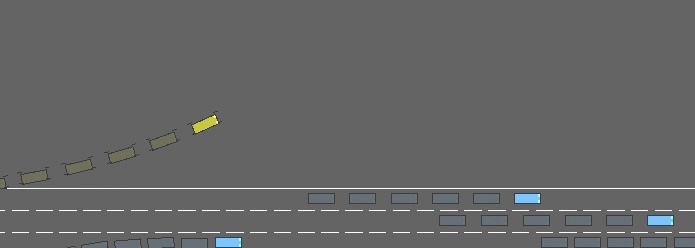}}
    \hfill
    \subfloat[$d = P_{30}$\label{fig:balanced_budget}]{%
        \includegraphics[width=0.45\linewidth]{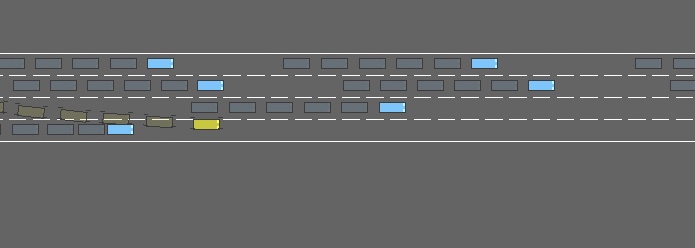}}
    \caption{Trajectories from RCI's learned policy in \texttt{HighwayEnv} under restrictive (\ref{fig:restrictive_budget}) and balanced (\ref{fig:balanced_budget}) safety budgets $d$, where $P_q$ denotes the $q^{th}$ percentile of the dataset's episodic cost distribution.}
    \label{fig:highway_budget}
\end{figure}

Figure~\ref{fig:reacher_vf} presents vector fields of the learned RCI policies in \texttt{Safe-FetchReach} across three safety budgets from the empirical percentile sweep. Arrows depict the policy's action directions in a two-dimensional projection of the state space, and color encodes the critic's value estimates.

\begin{figure}[h!]
    \centering
    \subfloat[\texttt{Safe-FetchReach} Workbench\label{fig:reacher_workbench}]{%
        \includegraphics[width=0.37\linewidth]{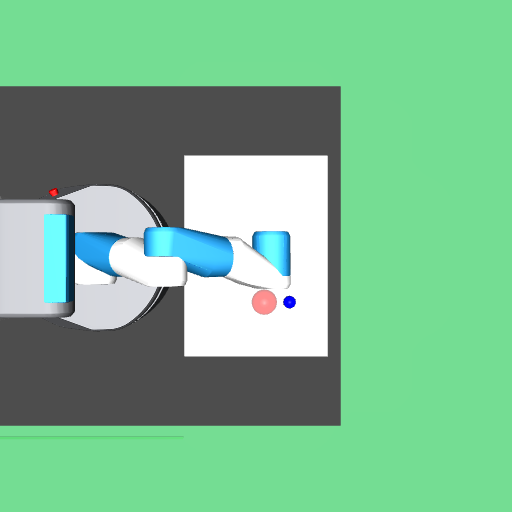}}
    \hspace{1em}
    \subfloat[Permissive safety budget\label{fig:reacher_vf_high}]{%
        \includegraphics[width=0.4\linewidth]{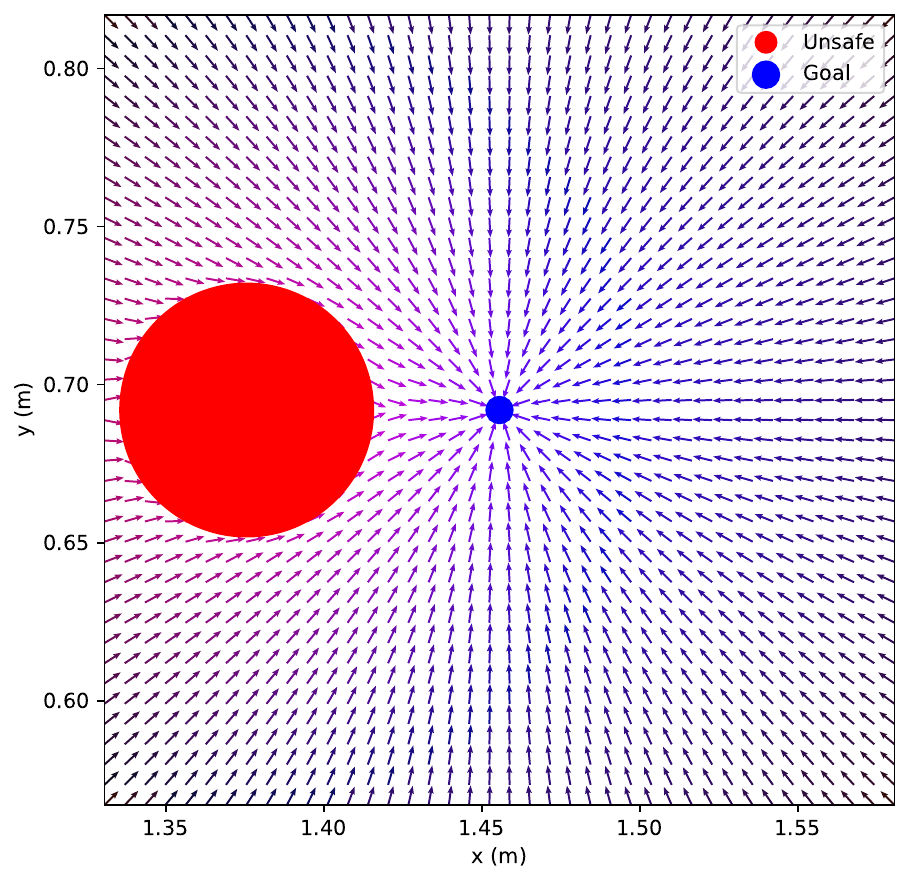}}

    \vspace{2em}

    \subfloat[Balanced safety budget\label{fig:reacher_vf_medium}]{%
        \includegraphics[width=0.4\linewidth]{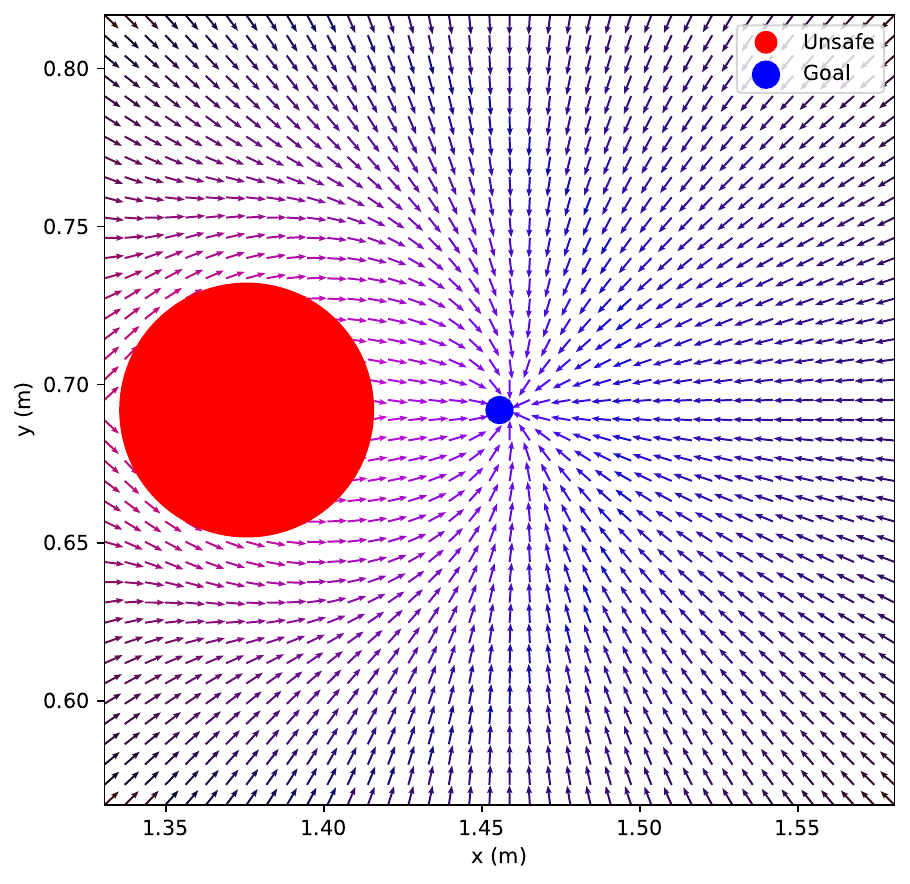}}
    \hspace{1em}
    \subfloat[Restrictive safety budget\label{fig:reacher_vf_low}]{%
        \includegraphics[width=0.4\linewidth]{plots/3.pdf}}
    \caption{Vector fields of RCI policies in \texttt{Safe-FetchReach} across three safety budgets. Arrows depict policy actions and color encodes the critic's value estimates. (\ref{fig:reacher_vf_high}) Higher budgets permit direct goal approach. (\ref{fig:reacher_vf_medium}) Intermediate budgets balance hazard avoidance with goal-reaching. (\ref{fig:reacher_vf_low}) Restrictive budgets enforce strong hazard avoidance at the expense of task success.}
    \label{fig:reacher_vf}
\end{figure}

The vector fields confirm that the learned policies adapt coherently to the imposed constraint. With a permissive budget (Figure~\ref{fig:reacher_vf_high}), the policy pursues the target directly, tolerating hazard proximity as the constraint permits risk-taking behavior. At an intermediate budget (Figure~\ref{fig:reacher_vf_medium}), the policy steers around the hazard while maintaining progress toward the goal, demonstrating learned avoidance without complete task abandonment. Under a restrictive budget (Figure~\ref{fig:reacher_vf_low}), the policy exhibits strong repulsive behavior away from the hazard region, prioritizing constraint satisfaction even at the cost of task success. This smooth degradation from goal-pursuit to hazard-avoidance as the constraint tightens reflects the Lagrangian mechanism: as $d$ decreases, the optimal multiplier $\lambda^*$ increases, reweighting the objective toward cost minimization over reward maximization.

The value estimates confirm this interpretation, with lower values near the hazard region indicating that the critic has learned to associate proximity with elevated risk. That spatially coherent avoidance strategies emerge from trajectory-level stop-feedback alone---without access to dense cost annotations---validates that the redistributed costs carry genuine per-step safety semantics.

\section{Experimental Details}
\subsection{Benchmark Environments}
\label{app:env-details}

Both environments share a common annotation protocol: if a safety violation occurs during data collection, the trajectory is truncated at the first unsafe step and assigned an episodic cost label $C(\tau)=1$; otherwise $C(\tau)=0$. This stop-feedback structure provides sparse supervision that indicates \emph{that} a violation occurred, but not which earlier decisions contributed to it.

\subsubsection{\texttt{HighwayEnv}}

\textbf{Task.}
The agent controls an ego vehicle on a multilane highway populated with traffic, aiming to maintain forward progress while avoiding unsafe maneuvers.

\textbf{State space.}
Each state is a fixed-size array of kinematic features for the ego and $V{-}1$ nearby vehicles, including presence, relative position $(x,y)$, velocities $(v_x,v_y)$, and orientation $(\cos h, \sin h)$. Features are normalized and expressed in ego-centric coordinates, with zero-padding to maintain fixed dimensionality.

\textbf{Action space.}
Actions are two-dimensional continuous controls $a = [a_\text{throttle},\, \delta_\text{steer}]^\top \in [-1,1]^2$, mapped to acceleration $\dot{v} \in [-5,5]\;\text{m/s}^2$ and steering $\delta \in [-0.785,0.785]\;\text{rad}$.

\textbf{Reward function.}
We modify the native reward to emphasize forward velocity, simulating a safety-unaware ego vehicle:
\begin{equation}
  R(s,a) = \alpha \cdot \frac{v - v_{\min}}{v_{\max} - v_{\min}},
\end{equation}
with $\alpha > 0$. The collision coefficient is set to zero, so collisions do not incur an explicit reward penalty; however, collisions still terminate the episode via the simulator's built-in termination condition. This ensures that reward-optimized behavioral policies can remain aggressive, while safety is enforced exclusively through the stop-feedback cost signal.

\textbf{Unsafe criterion.}
A transition is labeled unsafe when the ego vehicle comes within distance $0.2$ of another vehicle:
\begin{equation}
  c^{\text{sparse}}(s,a) = \mathbf{1}\!\Big\{\min_{j \in \{1,\dots,V-1\}} \lVert p_\text{ego} - p_j \rVert_2 \leq 0.2 \Big\}.
\end{equation}

\begin{figure}[h]
    \centering
    \includegraphics[width=0.6\linewidth]{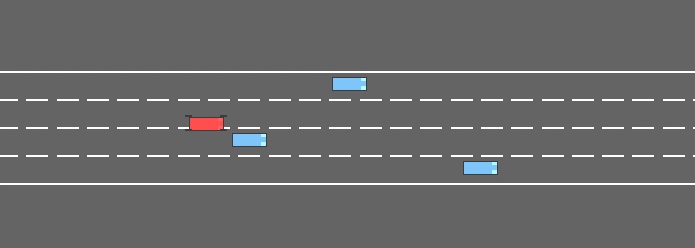}
    \caption{\texttt{HighwayEnv}: a transition is labeled unsafe when the ego vehicle comes within distance $0.2$ of another vehicle.}
    \label{fig:unsafe_highway}
\end{figure}

\subsubsection{\texttt{Safe-FetchReach}}

\textbf{Task.}
A 7-DOF Fetch robot must move its end-effector to a sampled Cartesian target. We introduce a spherical hazard region to assess safety.

\textbf{State space.}
Observations are tuples $(o, g_\text{ach}, g_\text{des})$, where $o \in \mathbb{R}^{10}$ encodes end-effector position, finger joint states, and velocities, and $g_\text{ach}, g_\text{des} \in \mathbb{R}^3$ denote achieved and desired goals.

\textbf{Action space.}
Actions are $a = [\Delta x, \Delta y, \Delta z, a_\text{grip}]^\top \in [-1,1]^4$, with the first three dimensions mapped to Cartesian displacements and the fourth controlling the gripper (unused in this task).

\textbf{Reward function.}
The dense shaping reward is:
\begin{equation}
  R(s,a) = -\lVert g_\text{ach} - g_\text{des} \rVert_2.
\end{equation}

\textbf{Unsafe criterion.}
We define a spherical hazard region $\mathcal{H} = \{p \in \mathbb{R}^3 : \lVert p - h \rVert_2 \leq r\}$ with center $h$ and radius $r$. A transition is labeled unsafe at the first step where the end-effector enters $\mathcal{H}$:
\begin{equation}
  c^{\text{sparse}}(s,a) = \mathbf{1}\{g_\text{ach} \in \mathcal{H}\}.
\end{equation}

\begin{figure}[h]
    \centering
    \includegraphics[width=0.3\linewidth]{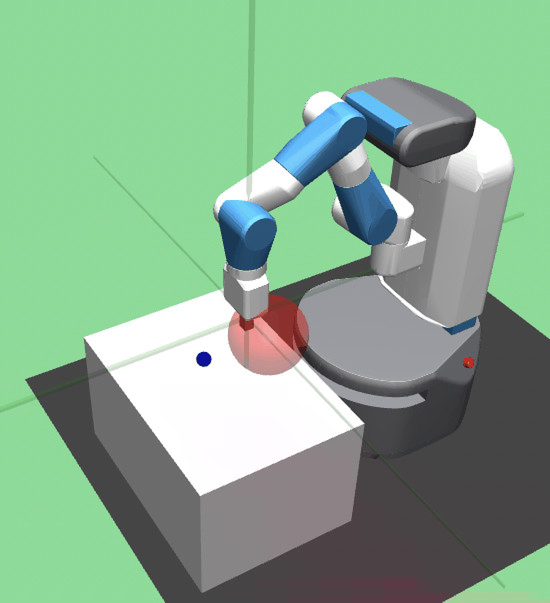}
    \caption{\texttt{Safe-FetchReach}: a transition is labeled unsafe when the end-effector enters the spherical hazard region $\mathcal{H}$.}
    \label{fig:unsafe_reacher}
\end{figure}

\end{document}